\begin{document}

\title{Learning Quantized Adaptive Conditions for Diffusion Models} 


\author{Yuchen Liang\inst{1}\orcidlink{0009-0003-7359-1067} \and
Yuchuan Tian\inst{2} \and
Lei Yu\inst{3} \and
Huaao Tang\inst{3}\and
Jie Hu\inst{3}\and
Xiangzhong Fang\inst{1}\and
Hanting Chen\inst{3}
}

\authorrunning{Liang et al.}

\institute{School of Mathematical Sciences, Peking University \and State Key Lab of General AI, School of Intelligence Science and Technology, Peking University \and
Huawei Noah’s Ark Lab\\
\email{ycliang@pku.edu.cn, chenhanting@huawei.com}}

\maketitle

\begin{abstract}
  The curvature of ODE trajectories in diffusion models hinders their ability 
  to generate high-quality images in a few number of function evaluations (NFE). 
  In this paper, we propose a novel and effective approach to reduce trajectory 
  curvature by utilizing adaptive conditions. By employing a extremely light-weight quantized 
  encoder, our method incurs only an additional 1\% of training parameters, eliminates 
  the need for extra regularization terms, yet achieves significantly better sample quality.
  Our approach accelerates ODE sampling while preserving the downstream task image editing capabilities of SDE techniques. Extensive experiments verify 
  that our method can generate high quality results under extremely limited sampling costs. 
  With only 6 NFE, we achieve 5.14 FID on CIFAR-10, 6.91 FID on FFHQ 64×64 and 3.10 FID on AFHQv2.
  \keywords{Accelerated Sampling \and Diffusion Models \and Generative Modeling \and Visual Tokenization}
\end{abstract}

\section{Introduction}
\label{sec:intro}

Generative models based on ordinary differential 
equations (ODEs) have led to unprecedented success in various domains, 
including image synthesis \cite{ho2022classifier}, audio synthesis \cite{kong2020diffwave}, 3D reconstruction \cite{poole2022dreamfusion}, 
and video generation \cite{ho2022video}. These models transform a tractable noise distribution to the data distribution with differentiable trajectories to 
Early attempts faced limitations due to the requirement of simulating ODEs, 
which hindered their practical applicability. Recent advancements in score-based diffusion 
models \cite{song2019generative,song2020score,ho2020denoising,chung2023diffusion} avoid explicit ODE generation 
by employing a forward stochastic differential equation (SDE) process with 
an accompanying Probability Flow (PF) ODE. By numerically simulating the 
PF-ODE, diffusion models enable the generation of high-quality samples. This process 
requires multiple evaluations of a neural network leading to slow sampling
speed.

Efforts have been made to accelerate the sampling process and fall
into two main streams. One stream aims to build a one-to-one
mapping between the data distribution and the pre-specified
noise distribution \cite{berthelot2023tract,liu2022flow,luhman2021knowledge,salimans2022progressive,song2023consistency}, based on the idea of
knowledge distillation. However, these
methods require huge efforts on training. Training such a student model should carefully design the training details and takes a large amount
of time to train the model (usually several GPU days). Moreover, as distillation-based models directly
build the mapping like typical generative models, they suffer from the inability of interpolating between 
two disconnected modes \cite{salmona2022pushforward}. Therefore, distillation-based methods
may fail in some downstream tasks requiring such an interpolation. Besides, distillation-based models cannot
guarantee the increase of sample quality given more NFE and they have difficulty in likelihood evaluation. 
The other stream of methods focuses on designing faster numerical
solvers to increase step size while maintaining the sampling quality \cite{dockhorn2022genie,karras2022elucidating, liu2022pseudo,lu2022dpm,song2020denoising,zhang2022fast,zhao2024unipc}.  
Although these methods have successfully
reduced the number of function evaluations (NFE) from
1000 to less than 20, almost without affecting the sample
quality.  Nevertheless, these methods still face the challenge of the intrinsic truncation errors rooted in the curvature of the trajectory.
The sampling quality deteriorates sharply when the sampling budget is further limited.
\begin{figure}[tb]
  \includegraphics[width=\columnwidth]{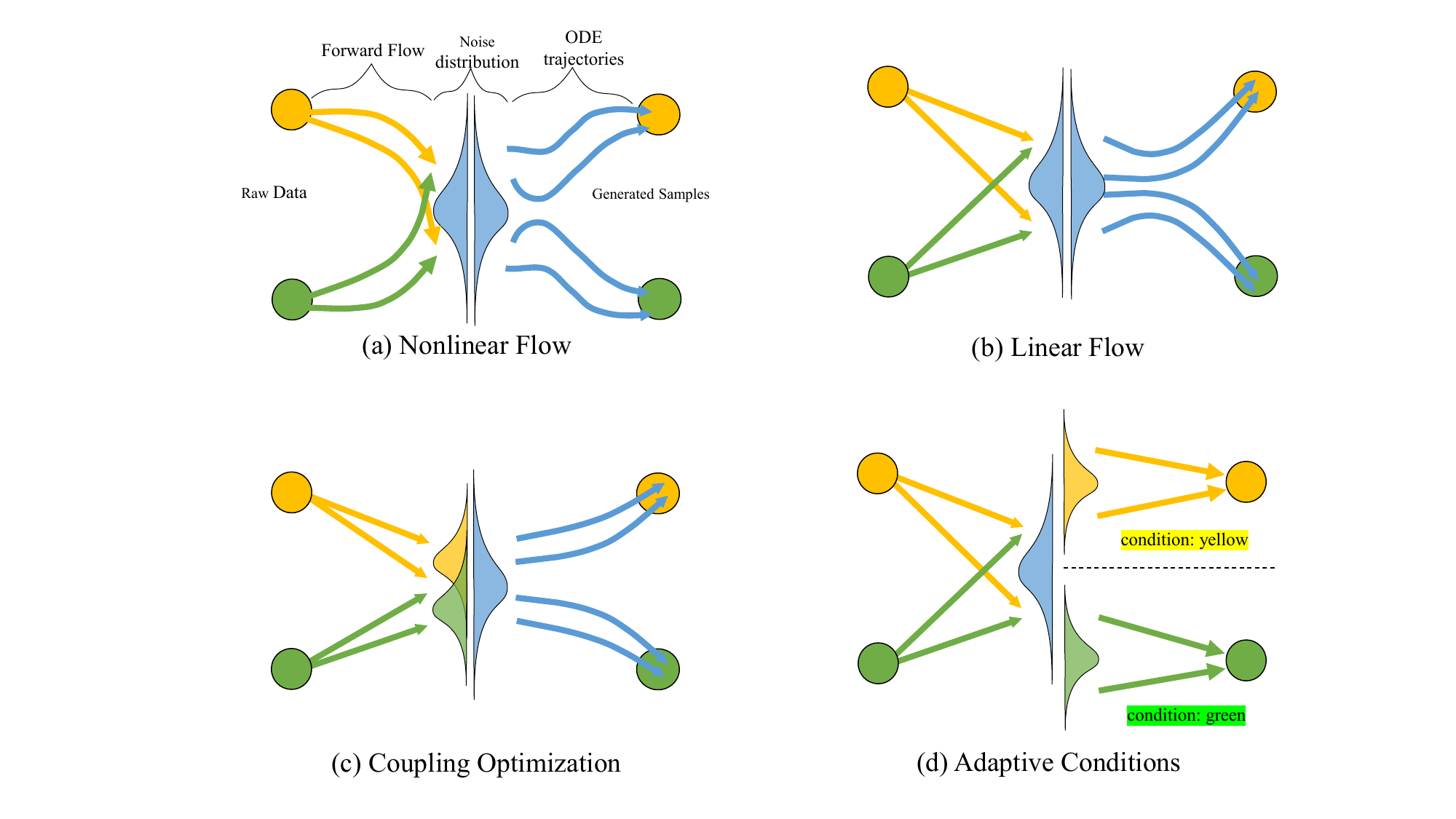}
  \caption{(a) Denoising diffusion models with nonlinear forward flow \cite{ho2020denoising,song2020score} have complex ODE trajectors. 
  (b) Linear Flow models \cite{liu2022flow,karras2022elucidating,song2020denoising,nichol2021improved} still have highly curved ODE trajectories.
  (c) Coupling optimization\cite{lee2023minimizing} methods try to reduce curvature by trajectory relocation, but are limited by the difficulty of keeping the noise distribution unchanged.
  (d) Adaptive Conditions untie the crossover between forward trajectories without compromising the full simulation accuracy.}
  \label{main_fig}
\end{figure} 

In order to reduce the trajectory curvature, Rectified Flow\cite{liu2022flow} provides a different understanding 
of diffusion models from the transport mapping perspective. The PF-ODE can be considered as the result of a 
rectification process, which unties the predefined forward coupling flows. In the case of score-based methods, independent coupling is 
utilized, which can be seen as a special case in the theory of Rectified Flow. These PF-ODE trajectories 
will be curved to avoid crossing. From this 
viewpoint, there are two key points to reduce the curvature of PF-ODE 
trajectories: the first is to use linear forward flow \cite{song2020denoising,karras2022elucidating,lipman2022flow}, 
and the second is to reduce the intersection of the forward trajectory to maintain 
straightness. 
Previous research aimed to reduce the intersection by optimizing coupling\cite{pooladian2023multisample,lee2023minimizing} via trajectory relocation. However, 
the optimization process is challenging to maintain the marginal noise distribution. 
And a series of stochastic sampling\cite{xue2024sa,lu2022dpm} and part of distillation techniques\cite{luo2024diff, yin2023one} designed for the score-based model are no longer available, 
which hinders the acceleration of the simulation process and impedes the applicability of these methods. 

In this paper, we propose a novel and effective approach for reducing the intersection. At the same time, the key properties of score-based models is retained.
Our approach is motivated by a straightforward analogy: when there is a 
large number of pedestrians needing to cross a road from both sides, city 
authorities would consider installing traffic lights instead of moving the road. Similarly, we lead 
the backward process with adaptively learned quantized conditions, which allows the backward PF-ODE trajectories 
to pass through intersection areas without 
the need for significant curvature or trajectory relocation.
A schematic diagram is shown in \cref{main_fig}.
Our contribution can be summarized as follows:
\begin{itemize}
  \item We investigate the relationship between the degree of forward flow intersection and the quality of the few-step sampling. 
  We provide theoretical support for the positive correlation of the intersection and the quality of the few-step sampling.
  \item We present a plug-and-play approach with a quite small additional training cost to reduce
  the degree of intersection, which is the first method that does not require trajectory relocation and additional regularization. 
  \item We conduct extensive comparison and ablation experiments on the CIFAR-10, MNIST, FFHQ and AFHQv2 to verify that 
  our method can achieve superior performance compared to the 
  original diffusion models in both few-step sampling and full sampling generation.
\end{itemize}
\section{Background}
\subsection{Diffusion Model}
Diffusion models set a stochastic differential equation (SDE) \cite{song2020score}
\begin{equation}
dx_t = \mu(x_t, t) dt + \sigma(t) dw_t \label{SDE}
\end{equation}
where $t\in [0,T], T>0$ is a fixed positive constant, $w_t$ is the standard Wiener process and $\mu(\cdot, \cdot)$ and $\sigma(\cdot)$ 
are the drift and diffusion coefficients respectively. 
We denote the distribution of $x_t$ as $p_t(x)$ and set $p_0(x)$ as the data distribution. A remarkable property of this SDE is
Remarkably, there exists a probability
flow ODE 
\begin{equation}
  dx_t = [\mu(x_t, t)- \frac{1}{2}\sigma(t)^2 \nabla_x \log p_t(x)] dt
  \end{equation}
  sharing the same marginals with the reverse SDE\cite{Maoutsa_2020}. 
  To simulate the PF-ODE, a U-Net $s_\theta(x, t)$ is usually trained to estimate the intractable score function $\nabla_x \log p_t(x)$ via score
  matching\cite{hyvarinen2005estimation,vincent2011connection}. 
The SDE in \cref{SDE} is designed such that $p_T$
is close to a tractable Gaussian distribution $\pi(x)$. 
During sampling stage, we will sample a noise image from $\pi(x)$ to initialize the empirical PF ODE
and solve it backwards in time with any numerical ODE solver.
\subsection{Rectified Flow}
From the view of transport mapping, Rectified Flow\cite{liu2022flow} offers
an alternative perspective which is fully explained under the
ODE scheme.
Let $\boldsymbol{X}=\{X_t:t\in [0,T]\}$ 
be any time-differentiable forward 
flow process that couples the data $X_0\sim p_{data}$ and the noise $X_T\sim p_{noise}$. Let $\dot{X_t}$ be the time derivative of $X_t$. The rectified flow induced from X is defined as 
\begin{equation}
  dZ_t = v^X(Z_t, t), \text{ with }Z_T=X_T
\end{equation}
where $v^X(z,t)=\mathbb{E} [\dot{X_t} | X_{t}=z ]$. And $v^X$ can be estimated by minimizing the conditional 
flow matching(CFM) objective
\begin{equation}
\label{1}
L_{\text{CFM}}(\theta) := \int_{0}^{T}w_t\mathbb{E}\| \dot{X_t} - v_{\theta}(X_t,t) \|^2dt, 
\end{equation}
where $w_t: (0,T)\rightarrow (0,+\infty)$ is a positive weighting sequence. 
The marginal preserving property\cite{liu2022flow}, 
$Law(Z_t)=Law(X_t)\quad \forall{t} \in [0,T],$
ensures that we can generate samples by simulating the reversed ODE fromtractable random variable $X_T$. With independent coupling $(X_0, X_T)$, 
different forward process correspond to different diffusion models:\\\\
\textbf{Nonlinear Flows}    
\begin{align}
        \text{VP\cite{ho2020denoising}} :& X_t=\alpha(t) X_0 +\sqrt{1-\alpha(t)^2} X_1, X_1\sim \mathcal{N} (0, I)\\
        \text{VE\cite{song2020score}} :& X_t=X_0 +\sqrt{t} \xi,  X_T \thickapprox \sqrt{T}\xi \sim \mathcal{N} (0, TI)
\end{align}
\textbf{Linear Flows} 
\begin{align}
    \text{EDM/DDIM\cite{karras2022elucidating,song2020denoising}} :& X_t=X_0 +t \xi,  X_T \thickapprox T\xi \sim \mathcal{N} (0, T^2I), \\
    \text{RectifiedFlow\cite{liu2022flow}} :& X_t=(1-t)X_0 +t X_1, X_1\sim \mathcal{N} (0, I).
\end{align}
Compared with nonlinear flow methods, linear flows have showed 
 a significant effect on sampling acceleration. 
\subsection{Coupling Optimization}
In order to further reduce the cost of ODE trajectory simulation, 
 Liu etc.\cite{liu2022flow} introduce an multistage optimization 
approach where the original coupling $(X_0, X_1)$ is substituted with rectified 
coupling $(Z_0, Z_1)$. Lee etc. \cite{lee2023minimizing} and Pooladian etc. \cite{pooladian2023multisample} adopt joint 
training to avoid additional training iterations and to prevent errors caused by 
ODE simulation. The joint training relies 
on the following bias-variance decomposition of the CFM loss:
\begin{equation}
  L_{\text{CFM}}=L_{\text{FM}} + V((X_0,X_1))
\end{equation}
where 
\begin{equation}
L_{\text{FM}} := \int_{0}^{T}w_t\mathbb{E}\|\mathbb{E}[\dot{X_t}|X_t]-v_{\theta}(X_t,t)\|^2 dt
\end{equation}
judges the accuracy of direction fitting and
\begin{equation}
V((X_0,X_1)) := \int_{0}^{T}w_t\mathbb{E}\|\dot{X_t}-\mathbb{E}[\dot{X_t}|X_t]\|^2 dt
\end{equation}
measures the intersection of forward flow. 

When $V((X_0,X_1))$ approaches zero, 
the curvature of rectified trajectories also tends to zero. Therefore, the coupling optimization 
can be performed jointly with direction fitting by minimizing $L_{\text{CFM}}$ 
without the need for simulation of ODE trajectories. 

However, it is challenging to maintain the marginal distribution for $X_0$ and $X_1$. 
Multisample flow matching \cite{pooladian2023multisample} constructs a doubly-stochastic matrix for the coupling ditribustion
, which limits by the batch size. And in the work\cite{lee2023minimizing} of Lee etc., they employ a 
reparamized noise encoder, which compromises to the error between the encoded 
distribution and the prior distribution. To address this challenge, we propose an alternative approach 
to reduce the intersection of the forward flow. 

\section{Adaptive Conditions}
We discriminate the forward trajectories with different adaptive conditions represented by $Y$, which can be considered as 
pseudo-labeling of the image data $X_0$ and independent of the noise $X_1$. 
The allocation of conditions is carried out by an autoencoder $q_{\phi}(x_0, y) = p_{data}(x_0)q_{\phi}(y|x_0)$ 
and leaded by the conditional CFM loss
\begin{equation}
    \label{2}
    L_{\text{CFM}}=\int_{0}^{T}w_t\underset{X_0, Y\sim q_{\phi}}{\mathbb{E}}\| \dot{X_t} - v_{\theta}(X_t,t, Y) \|^2dt,
\end{equation}
which is an expection of unconditional $L_{\text{CFM}}$ with different conditions $Y$. 
\subsection{Discretization Error Control}
We present theoretical evidence demonstrating that for linear flows $L_{CFM}$ can effectively control the influence of discretization error accumulation
on the generation quality and address the inconsistencies that impact distillation efficiency. 
\begin{theorem}
\label{th:3.1}
Let $S_{t, y}\sim \tilde{p}_t $ be a simulation of $X_t|_{Y=y} \sim p_{t,y} $ and 
\begin{equation}
  S_{t,y} - \Delta t~v_{\theta}(S_{t,y}, t, y) =: d_{t,\Delta 
  t}(S_{t,y}, y)\sim \tilde{p}_{t-\Delta t,y},
\end{equation}
be the one-step further simulation of $X_{t-\Delta t, y}\sim p_{t-\Delta t,y}$. And $d_{t,\Delta t}(S_{t,Y},Y)\sim \tilde{p}_{t-\Delta t}$ denote the overall simulation of $X_{t-\Delta t}\sim p_{t-\Delta t}$. Then we can
contral the Wasserstein distance 
\begin{align}
W(\tilde{p}_{t-\Delta t}, p_{t-\Delta t}) &\leq [\mathbb{E}_Y W^2(\tilde{p}_{t-\Delta t,Y}, p_{t-\Delta t,Y})]^{\frac{1}{2}}\\
&\leq \Delta t\cdot l_{\text{CFM}}
(t)+ L[\mathbb{E}_Y
W^2(\tilde{p}_{t,Y}, p_{t,Y})]^{\frac{1}{2}}& \label{3}
\end{align}
where $L$ is the Lipschitz constant for $d_{t,\Delta t}(\cdot, y)$ and 
\begin{equation}
  l_{\text{CFM}}(t)=[\mathbb{E}\| \dot{X_t} - v_{\theta}(X_t,t, Y) \|^2]^{\frac{1}{2}}
\end{equation}
is a component of the optimization objective defined in \cref{2}
\begin{equation}
  L_{\text{CFM}}=\int_{0}^{T}w_t [l_{\text{CFM}}(t)]^2dt.
\end{equation}
\end{theorem}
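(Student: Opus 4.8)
The plan is to establish the two inequalities separately, treating the first as a statement about mixtures over $Y$ and the second as a one-step error-propagation estimate that is afterwards aggregated in $L^2(Y)$.

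For the first inequality I would exploit that both $p_{t-\Delta t}$ and $\tilde p_{t-\Delta t}$ are mixtures over the \emph{same} law of $Y$: marginalizing the pseudo-label recovers $p_{t-\Delta t}=\mathbb{E}_Y[p_{t-\Delta t,Y}]$, and since the overall simulation draws $Y$ first and then runs the conditional Euler step, $\tilde p_{t-\Delta t}=\mathbb{E}_Y[\tilde p_{t-\Delta t,Y}]$ with the identical mixing measure. I would then glue the conditional optimal couplings: for each $y$ pick the $W$-optimal coupling of $\tilde p_{t-\Delta t,y}$ and $p_{t-\Delta t,y}$, and integrate these against the law of $Y$ to obtain a (generally suboptimal) coupling of the two marginals. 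Evaluating the quadratic cost of this coupling yields $W^2(\tilde p_{t-\Delta t},p_{t-\Delta t})\le \mathbb{E}_Y W^2(\tilde p_{t-\Delta t,Y},p_{t-\Delta t,Y})$, which after taking square roots is exactly the first bound; this is the joint convexity of squared Wasserstein distance in disguise.

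For the second inequality I would fix $y$ and insert an intermediate law $\hat p_{t-\Delta t,y}$, namely the pushforward of the \emph{true} conditional law $p_{t,y}$ under the same Euler map $d_{t,\Delta t}(\cdot,y)$, and apply the triangle inequality for $W$. The first piece, $W(\tilde p_{t-\Delta t,y},\hat p_{t-\Delta t,y})$, compares two pushforwards of one $L$-Lipschitz map, so pushing the optimal coupling of $\tilde p_{t,y}$ and $p_{t,y}$ through $d_{t,\Delta t}(\cdot,y)\times d_{t,\Delta t}(\cdot,y)$ gives the contraction bound $L\,W(\tilde p_{t,y},p_{t,y})$. The second piece, $W(\hat p_{t-\Delta t,y},p_{t-\Delta t,y})$, is the local truncation error; I would couple $X_t$ and $X_{t-\Delta t}$ along the true forward flow conditioned on $Y=y$ and estimate $W^2\le \mathbb{E}\big[\,\|(X_t-\Delta t\,v_\theta(X_t,t,y))-X_{t-\Delta t}\|^2 \mid Y=y\,\big]$.

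Here linearity of the flow does the decisive work: for linear $X_t$ the velocity $\dot X_t$ is constant along each trajectory, so $X_{t-\Delta t}=X_t-\Delta t\,\dot X_t$ holds \emph{exactly}, and the bracketed difference collapses to $\Delta t\,(\dot X_t-v_\theta(X_t,t,y))$ with no higher-order remainder, giving $W(\hat p_{t-\Delta t,y},p_{t-\Delta t,y})\le \Delta t\,[\mathbb{E}(\|\dot X_t-v_\theta\|^2\mid Y=y)]^{1/2}$. Combining the two pieces produces a pointwise-in-$y$ recursion; I would finish by taking the $L^2(Y)$ norm of both sides and invoking Minkowski's inequality, after which the truncation term's $L^2(Y)$ norm collapses by the tower property to $\Delta t\,[\mathbb{E}\|\dot X_t-v_\theta\|^2]^{1/2}=\Delta t\,l_{\text{CFM}}(t)$, yielding precisely the claimed bound. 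The main obstacle is exactly this last exactness step: the argument's cleanliness depends on $\dot X_t$ being trajectory-constant (linear flow), since for nonlinear flows an $O(\Delta t^2)$ remainder reappears and the local term no longer reduces to $l_{\text{CFM}}(t)$ alone. I would therefore state the linearity hypothesis explicitly and verify that the $Y$-marginal is genuinely preserved by the simulation, so that the mixture structure used in both inequalities is valid.
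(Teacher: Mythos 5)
Your proposal is correct and follows essentially the same route the paper indicates: a coupling-based one-step analysis (local truncation error plus Lipschitz amplification of the accumulated error, aggregated via convexity of $W^2$ over the mixture in $Y$), paralleling the Wasserstein bound of Kwon et al.\ that the authors cite as their template. Your explicit observation that exactness of the local term hinges on the linearity of the forward flow matches the hypothesis the paper states in the surrounding text ("for linear flows"), so no gap remains.
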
 
\begin{proof}
  The proof parallels the proof of the Wasserstein distance upper bound 
  for score-based generative models \cite{kwon2022score}. A tighter upper bound can also be obtained 
  following the technique provided by \cite{kwon2022score}.  
  We provide a complete proof in supplementary materials.
  \end{proof}
And for unconditional $L_{\text{CFM}}$ there is a more intuitive and concise version of \cref{3}. 
\begin{equation}
W(\tilde{p}_{t-\Delta t}, p_{t-\Delta t}) \leq
\underbrace{\Delta t}_{\text{step size}}\cdot \underbrace{l_{\text{CFM}}
(t)}_{\text{new error}}+ \underbrace{L}_{\text{amplifying coefficient}}\cdot\underbrace{W(\tilde{p}_{t}, p_{t})}_{\text{original error}}
\end{equation}

\cref{th:3.1} have shown that a smaller loss $L_{\text{CFM}}$ can provide quality assurance 
for single-step update with Euler solver, and more advanced deterministic and stochastic solvers can be regarded as 
corrections based on this result. Specifically when setting $\Delta t=t$ the theorem shows that the gap between the predicted image 
distribution and the groudtruth distribution can be bounded by $L_{\text{CFM}}$. The experiments of \cite{lee2023minimizing} and 
\cite{shao2023catchup} showed that a smaller $L_{\text{CFM}}$ can improve the efficiency and final performance of 
distillation. \cite{pooladian2023multisample} also clarified that $L_{\text{CFM}}$ can reduce the variance of 
stochastic gradients, which provides a more stable training process and convergence speed.  For the score-base diffusion model, $L_{CFM}$ cannot be optimized to zero 
because of the intersection of forward flows. 
Fortunately, as shown in \cref{t1}, coupling optimization and adaptive condition provide more 
optimization space and when the $L_{CFM}$ approaches zero, the ode trajectory tends to be completely straight. 
\begin{table}[htb]
  \caption{The result of the optimized $L_{\text{CFM}}$ and average curvature of ODE trajectories after training by 
  15M images drawn from the dataset Cifar10. We use the same model configs and sampler as Reparamized Noise Encoder(RNE)\cite{lee2023minimizing} 
  with a prior regularization $\beta=20$ for a fair comparison.}
  \label{t1}
  \centering
  \setlength{\tabcolsep}{3mm}{
  \begin{tabular}{@{}lcccc@{}}
    \toprule
    \multirow{2}{*}{Method} & \multirow{2}{*}{$L_{\text{CFM}}\downarrow$}  & \multirow{2}{*}{Curvature$\downarrow$} & \multicolumn{2}{c}{FID$\downarrow$}\\
    \cmidrule(r){4-5}
    & & & Heun w/5 NFE & RK45 \\
    \midrule
    RectifedFlow  & 0.176 & 0.46 & 37.19 & 2.66 \\
    RNE & 0.153 & \textbf{0.38} & 24.54 & \textbf{2.45}\\
    Adaptive Condition &  \textbf{0.132} & 0.40 & \textbf{19.68} & \textbf{2.43} \\
  \bottomrule
  \end{tabular}}
\end{table}
\subsection{Quantized Condition Encoder}
For learning the adaptive conditions, we have the flexibility to use any variational autoencoder (VAE)\cite{kingma2013auto}. However, unlike reparamized noise encoder\cite{lee2023minimizing}, the condition coding space is decoupled from the noise space. This decoupling provides more freedom in choosing the encoding strategy. We chose to use a quantized encoder because it doesn't suffer from posterior collapse, which is a common issue in other VAEs. A quantized encoder with a sufficiently large coding space can handle high-resolution image reconstruction effectively.

Typically, a visual tokenization generative model\cite{van2017neural, esser2021taming} requires an additional sequence model to learn the distribution of the quantized feature map. To avoid introducing extra training and inference costs, we use a single quantized vector instead of a quantized feature map. This significantly reduces the coding space dimensionality by several orders of magnitude. However, it also allows us to use a lightweight encoder( smaller than 0.8M), and the empirical distribution of the code vectors can be easily collected online.

We first encode the image $x$ into a $d$-dimensional vector $y=E_{\phi}(x)$ 
via a neural network encoder, and then discretize it using 
a finite scalar quantization\cite{mentzer2023finite} with some minor modifications.
For each channel $y_i$, we use a function $f$ to restrict 
its output to $L$ discrete integers. We choose $f$ to be 
$f: y\mapsto \left\lfloor L \cdot \sigma(y)\right\rfloor $ where $\sigma(y)=1/(1+e^{-y})$ is the sigmoid function, 
which is more symmetric compared to the original version $round(\left\lfloor L/2\right\rfloor tanh(y))$ when $L$ 
is even. As a result, the value space of $y_q=f(y)$ forms 
an implied codebook $C=\{0,1,\dots,L-1\}^d$, which is given by the product of these 
per-channel codebook sets, with $|C| = L^d$. Finally, we use the 
Straight-Through Estimator\cite{bengio2013estimating} to copy the gradients from the decoder 
input to the encoder output, which allows us to obtain gradients 
for the encoder. This can be implemented easily using the 
"stop gradient" (sg) operation as follows: 
$f_{STE}:y\mapsto y + sg(f(y) - y)$.

Each condition code $y_q\in C$ corresponds to a data slice, which can be viewed 
as a form of pseudo-labeling. We train diffusion models with quantized condition 
encoder just as in a conditional manner. We use two MLP layers to map the 
condition code to the same dimension as the time embedding and then add them 
together. Then, we input the combined embedding and noised image into the decoder. 
A visual schematic of our approach is shown in Figure \ref{frame}.
\begin{figure}[tb]
  \centering
  \includegraphics[width=\columnwidth]{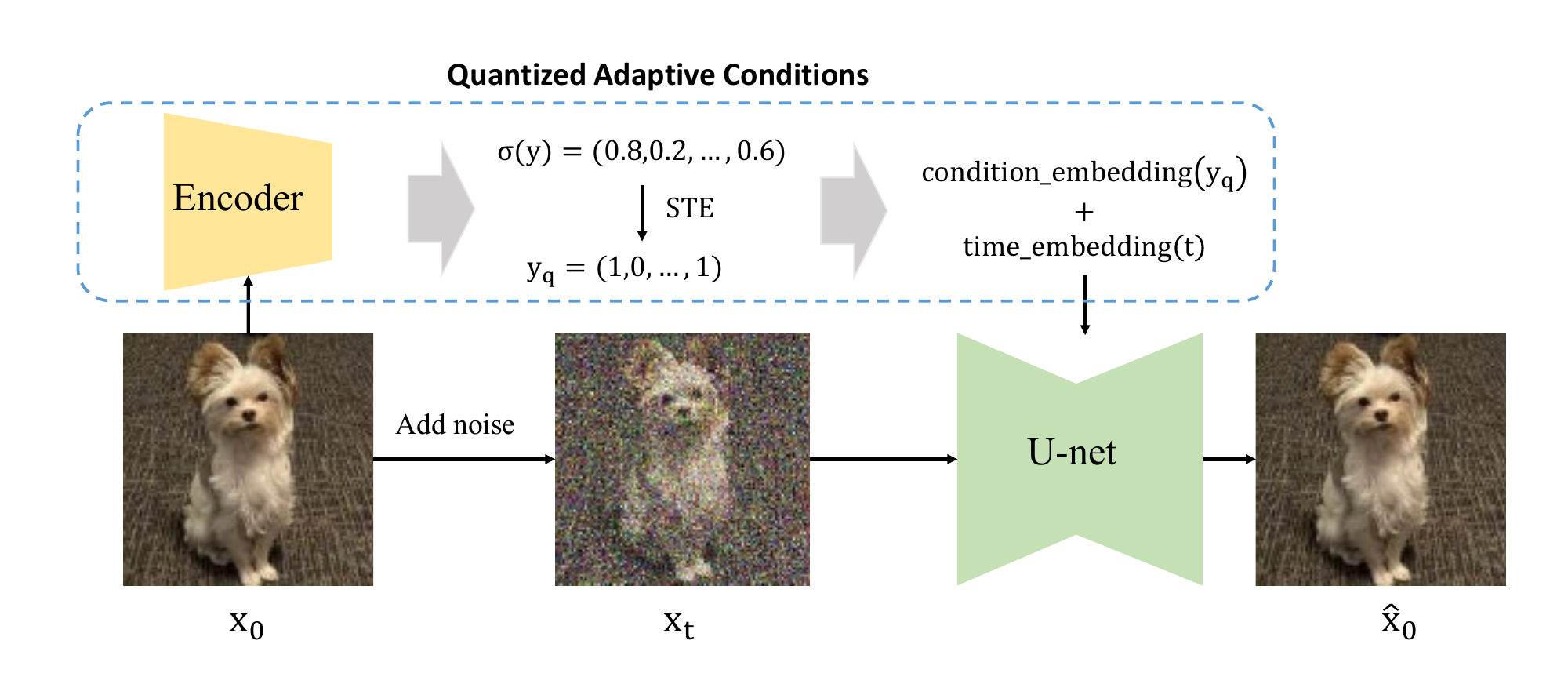}
  \caption{A visual schematic of our approach.}
  \label{frame}
\end{figure} 
\subsection{Online Sampling Weight Collection} Diffusion models, 
even when they converges, still have non-zero gradients. Therefore, 
exponential moving average (EMA) is often used to update the parameters more stably. 
However, this introduces a time inconsistency problem between the parameters 
and the condition sampling weight. To deal with this problem, we propose two  sampling weight collection strategies: 
\begin{itemize}
\item \textbf{Offline collection}: Also update the condition encoder with EMA and then collect the sampling weight for the training dataset with the final checkpoint of the EMA encoder. 
\item \textbf{Online collection}: Inspired EMA normalization\cite{cai2021exponential}, we use 
EMA synchronized with the model parameters to update the sampling 
weight for each mini-batch during the training process. 
\end{itemize}
As shown in \cref{t2}, the online sampling weight exhibits significantly better performance on both few step generation with Heun's method and full simulation generation with an adaptive step solver RK45.
\begin{table}[htb]
  \caption{A comparison of online and offline collection, based on ODE trajectory of RectifiedFlow with a condition codebook size $2^{20}$}\label{t2}
  \centering
  \setlength{\tabcolsep}{3mm}{
  \begin{tabular}{@{}lccc@{}}
    \toprule
    \multirow{2}{*}{Collection Strategy}& \multicolumn{3}{c}{FID$\downarrow$}\\
    \cmidrule(r){2-4}
    & Heun w/ 5 NFE & Heun w/ 7 NFE & RK45\\
    \midrule
    Offline  & 21.07 & 14.40 & 5.96 \\
    Online   & \textbf{19.68} & \textbf{13.22}  & \textbf{2.61}\\
  \bottomrule
  \end{tabular}}
\end{table}

\subsection{Training and Sampling}
We train the condition encoder and image denoiser networks jointly. Following previous works, we 
update the  model parameters and the online 
sampling weights with EMA. The complete training process of quantized adaptive conditions can be summarized as \cref{alg1}.

\begin{algorithm}[ht]
  \caption{Quantized Adaptive Conditions Training}\label{alg1}
  \begin{algorithmic}[1]
    \Require 
    dataset $\mathcal{D}$, noise level distribution $p_{\sigma}$, 
    encoder initial parameter $\phi$, denoiser initial parameter $\theta$, 
    loss weighting $\lambda(\cdot)$, learning rate $\eta$, EMA decay rate $\mu$   
    \State $\theta^{-}=\theta$                     \Comment{Copy initial parameter to EMA model}
    \State $w \leftarrow 0_{1\times |\mathcal{C}|}$ \Comment{initiate sampling weight with zeros}
    \Repeat 
    \State Sample $x\sim \mathcal{D}$, $t\sim p_{\sigma}$ and $z\sim \mathcal{N}(0,I)$
    \State$y\leftarrow E_\phi(x)$
    \State$y_q \leftarrow  y + sg(\left\lfloor L \cdot \sigma(y)\right\rfloor-y)$
    \Comment{Compute condition codes by STE}
    \State$x_t\leftarrow x+tz$
    \State$\mathcal{L}(\theta, \phi)\leftarrow \lambda(t)\|x-D_{\theta}(x_t,t,y_q)\|^2$
    \State$\theta \leftarrow \theta -\eta \frac{\partial \mathcal{L}}{\partial \theta}$,
    $\phi \leftarrow \theta -\eta \frac{\partial \mathcal{L}}{\partial \phi}$ 
    \State$\theta^{-} \leftarrow \mu \theta^{-} + (1-\mu) \theta$
    \State $w_{batch} \leftarrow \text{Count}(\text{Index}(y_q)) $      \Comment{Collect batch sampling weights of code indices}
    \State$w \leftarrow \mu w + (1-\mu) w_{batch} $ \Comment{Update sampling weights by EMA}
    \Until{convergence}\\
    \Return{$\theta^{-}, w$}
  \end{algorithmic}
\end{algorithm}

During the sampling stage, we also follow a procedure that is similar to regular conditional diffusion models. Here is a breakdown of the steps:
\begin{enumerate}
  \item Code Index Selection: Randomly select an condition index based on the collected sampling weights. This index corresponds to a specific condition code.
  \item Noise Sampling: Sample a random noise from the distribution of $X_T$ or an approximation of it. This sampled noise serves as the initial point for the reverse ODE or SDE process.
  \item Score Function Estimation: Utilize the denoiser's output to compute an estimation of the score function conditioned on the selected condition code. This estimation helps guide the sampling process.
  \item Reverse Process Simulation: Use a deterministic or stochastic solver to numerically simulate the reverse process. This solver propagates the initial point backward in time, following the dynamics defined by the reverse ODE or SDE. The result is a generated sample that incorporates the selected condition code and the sampled noise.
\end{enumerate}
By incorporating adaptive conditions, the reverse process simulation ensures that the generated samples follow the desired dynamics defined by the reverse ODE or SDE.
\begin{table}[tb]
  \centering
    \caption{FIDs with different settings of condition encoder.} 
    \label{t3}
        \begin{tabular}{@{}lcccc@{}}
          \toprule
          Dataset &  CIFAR-10 & \multicolumn{2}{c}{MNIST}\\
          Solver & Heun & \multicolumn{2}{c}{Euler} \\
          NFE & 9  & 4 & 16\\
          \midrule
          w/o condition     & 12.92 & 31.67 & 11.16 \\
          VAE(optimal weight)  & 12.23 & 10.75 & 5.63 \\
          FSQ($L=8,d=4$) & 11.07 & 8.50 & 1.81\\
          FSQ($L=4,d=6$) & 10.85& 7.49 & 1.71\\
          FSQ($L=2,d=12$) & \textbf{10.83} & \textbf{6.36} & \textbf{1.19} \\
          \bottomrule
          \end{tabular}
\end{table}
\begin{table}[htb]
  \centering
  \caption{FID on CIFAR10 with Heun's method when scaling the
  codebook size}\label{t4}
  \setlength{\tabcolsep}{4mm}{
      \begin{tabular}{@{}lccc@{}}
        \toprule
        Codebook size$\backslash$NFE & 5 & 7 & 9 \\
        \midrule
        w/o a codebook     & 37.19 & 18.60 & 12.92 \\
        $|\mathcal{C}|=2^{12}$ & 21.86 & 13.78 & 10.83 \\
        $|\mathcal{C}|=2^{14}$ & 21.38 & 13.74 & 10.71 \\
        $|\mathcal{C}|=2^{16}$ & 20.18& 13.22 & 10.60 \\
        $|\mathcal{C}|=2^{18}$ & 20.13 & 13.17 & 10.34 \\
        $|\mathcal{C}|=2^{20}$ & \textbf{19.68} & \textbf{13.11} & \textbf{10.28} \\
        \bottomrule
        \end{tabular}}
\end{table} 
\section{Experiments}
\subsection{Quantized Condition Encoder}\label{sec:4.1}
\Cref{t3} demonstrates the improvement of generated image quality 
achieved by different condition encoders. We use Rectified Flow\cite{liu2022flow} as the baseline and adopt the same training configuration and sampling procedure 
temporarily in this subsection for a fair comparison. Following RNE\cite{lee2023minimizing}, 
we choose the encoder network with $\frac{1}{4}$ channels and $\frac{1}{2}$ blocks of the counterpart U-net network. 
And it's worth mentioning that we only need the encoder part of the U-net network, so we used even fewer extra parameters than RNE\cite{lee2023minimizing}.
Regardless of the type of 
condition encoders used, quantized adaptive conditions can consistently enhance the few-step generation 
performance. The Finite Scalar Quantizations (FSQ) excels in avoiding prior collapse and outperforms vanilla VAE significantly.

For a fixed codebook size of $2^{12}$, smaller levels yield better performance. This is because the quantized code forms an information bottleneck for image reconstruction, and smaller levels provide more dimensions to capture essential details. Therefore, we set the level $L=2$ to obtain a higher-quality representation of the images during the generation process.

As shown in \cref{t4}, scaling the codebook size can always demonstrate 
better generation quality. Even when the codebook size reaches $2^{20}$, 
which exceeds the number of training samples by far, the sampling weights 
as a model buffer only occupy a parameter count of 1M. 
\begin{figure}[tb]
  \centering
  \includegraphics[width=\columnwidth]{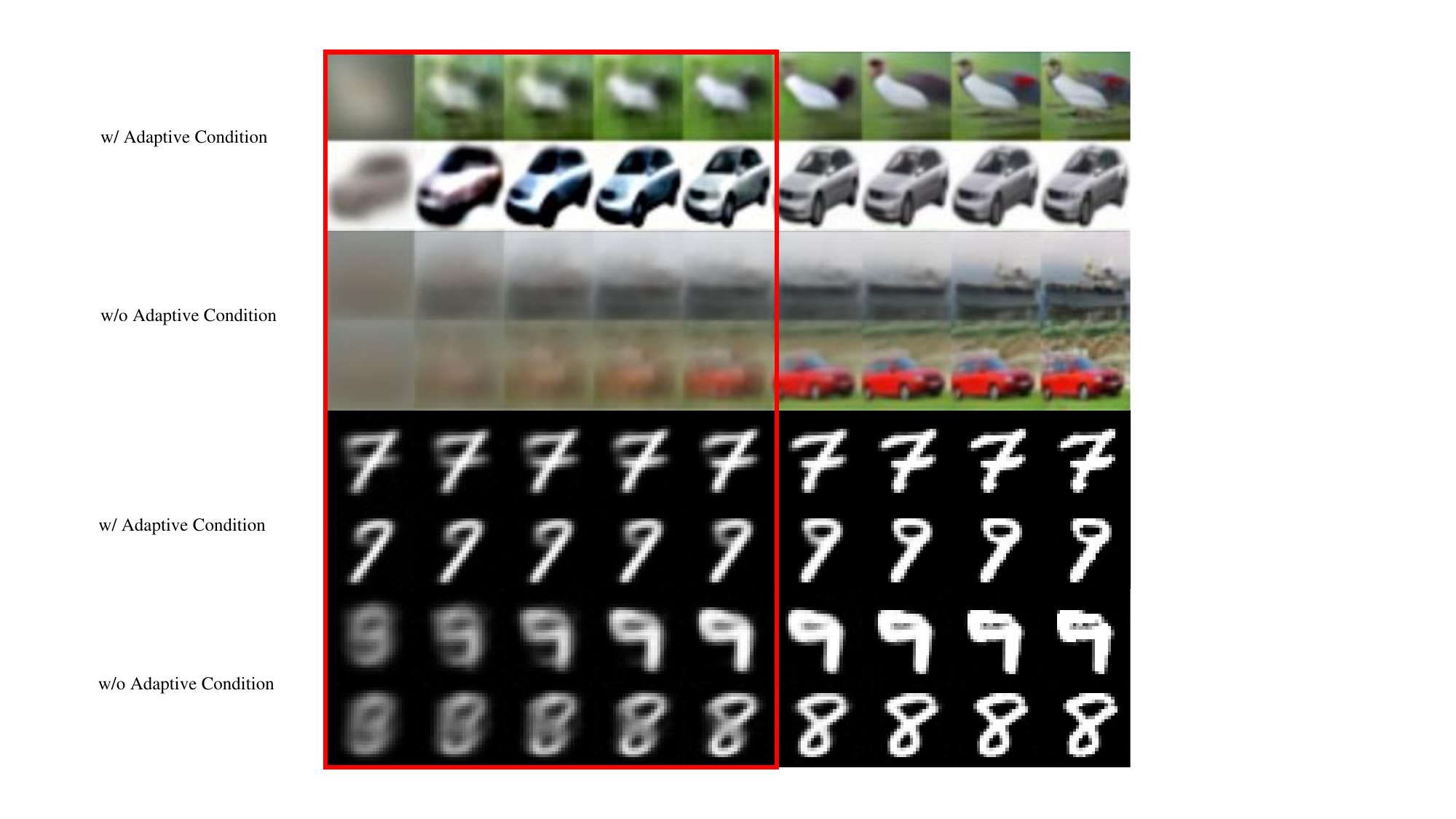}
  \caption{Visualization of intermediate samples. Adaptive conditions allow for sharper initial predictions at high noise level, as indicated
  by red boxes.}
  \label{traj}
  \end{figure} 
\subsection{Working with Reparamized Noise Encoder}\label{sec:4.2}
Since the code space of condition is independent of the noise space, the quantized condition encoder can be 
applied with the reparamized noise encoder together to further reduce the curvature.
These two approaches can cooperate efficiently with each other by sharing the encoder network backbone.
As shown in \cref{t5}, the best results are obtained by using both methods.
\begin{table}[!htb]
  \centering
  \caption{Applying reparamized noise encoder (RNE) and quantized adaptive conditions (QAC) at the same time.}
  \label{t5}   
  \setlength{\tabcolsep}{4mm}{
  \begin{tabular}{@{}lccc@{}}
        \toprule
        NFE & 5 & 7 & 9 \\
        \midrule
        RectifiedFlow    & 37.19 & 18.60 & 12.92 \\
        w/ RNE only & 24.54 & 13.99 & 10.21 \\
        w/ QAC only & 19.68 & 13.11 & 10.28 \\
        w/ RNE \& QAC & \textbf{18.20} & \textbf{12.11} & \textbf{9.58} \\
        \bottomrule
        \end{tabular}}
        \vspace{-1cm}
\end{table} 
\begin{figure}[tb]
  \centering
  \includegraphics[width=\columnwidth]{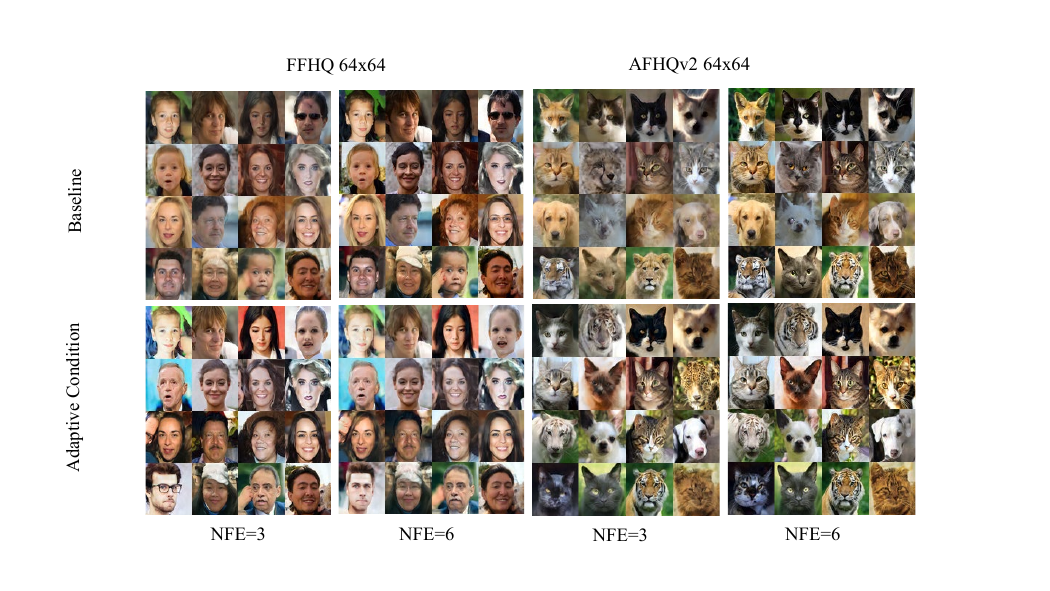}
  \vspace{-1cm}
  \caption{Qualitative comparison between our method and baseline on CIFAR-10, FFHQ and AFHQv2.}
  \label{samples}
  \end{figure} 
  \begin{table}[h!t]
    \caption{Performance comparisons on CIFAR-10.}
     \centering
     \setlength{\tabcolsep}{7mm}{
     \begin{tabular}{lcc}
       \toprule
       Model & NFE$\downarrow$ & FID$\downarrow$ \\
       \midrule
       \multicolumn{3}{l}{\textbf{Diffusion Models}}\\
       \midrule
       DDPM~\cite{ho2020denoising} & 1000 & 3.17 \\
       NCSN++~\cite{song2020denoising} & 2000 & 2.20  \\
       VDM~\cite{kingma2021variational} & 1000 & 7.41 \\
       LSGM~\cite{vahdat2021score} & 147 & 2.10 \\
       RectifiedFlow~\cite{liu2022flow} & 127 & 2.58\\
       EDM~\cite{karras2022elucidating} & 35 & 2.01 \\
       RNE~\cite{lee2023minimizing} & 9 & 8.66 \\
       \midrule
       \multicolumn{3}{l}{\textbf{Accelerated Sampler}}\\
       \midrule
       DDIM~\cite{song2020denoising} & 10 &  15.69 \\
       DPM-solver\cite{lu2022dpm} & 8 & 10.30 \\
       DPM-solver++\cite{lu2023dpmsolverfastsolverguided} & 6 & 11.85 \\
       UniPC~\cite{zhao2024unipc} & 6 & 11.10 \\
       DEIS~\cite{zhang2022fast} & 6 & 9.40 \\
       DPM-Solver-v3~\cite{zheng2023dpmsolverv3} & 6 & 8.56 \\
       AMED~\cite{zhou2023fast} & 6 & 6.63 \\
       \midrule
       QAC(Ours) & 20 & 2.10  \\
       QAC(Ours) & 6 & 5.14  \\
       \bottomrule
     \end{tabular}}
     \vspace{-0.5cm}
    \label{tab:cifar10_baseline}
 \end{table}
\begin{table}[!htb]
  \centering
  \caption{Sampling from Quantized Adaptive Conditions(QAC) with iPNDM(afs).}
  \label{t6}
  \setlength{\tabcolsep}{3mm}{
      \begin{tabular}{lcccc}
        \toprule
        NFE & 3 & 4 & 5 & 6 \\
        \midrule
        \multicolumn{5}{l}{\textbf{CIFAR10-32x32}}\\
        \midrule
        EDM\cite{karras2022elucidating} & 26.21 & 14.43 & 8.17 & 5.32  \\
        QAC & \textbf{21.06} & \textbf{12.48} & \textbf{7.50} & \textbf{5.14} \\
        \midrule
        \multicolumn{5}{l}{\textbf{FFHQ-64x64}}\\
        \midrule
        EDM\cite{karras2022elucidating} & 27.94 & 19.90 & 13.30 & 8.50  \\
        QAC & \textbf{22.92} & \textbf{15.34} & \textbf{9.42} & \textbf{6.91} \\
        \midrule
        \multicolumn{5}{l}{\textbf{AFHQv2-64x64}}\\
        \midrule
        EDM\cite{karras2022elucidating} & 15.60 & 8.58 & 5.55 & 3.78  \\
        QAC & \textbf{8.32} & \textbf{5.04} & \textbf{3.51} & \textbf{3.10} \\
        \bottomrule
        \end{tabular}}
\end{table} 
\subsection{Comparison with state-of-the-arts}\label{sec:4.3}
\Cref{t6} shows the unconditional synthesis results of our
approach on real-world image datasets including CIFAR-10\cite{krizhevsky2009learning} at $32\times32$
resolution, 
FFHQ\cite{karras2019style} and AFHQv2\cite{choi2020stargan} at $64\times64$ resolution. Considered as a 
one-session training approach, in \cref{tab:cifar10_baseline} we compare 
our method with other one-session training methods including a variety of diffusion models such as  NCSN++, DDPM, EDM, Rectified Flow and RNE and the accelerated sampling techniques. 
We train quantized adaptive conditions with a codebook size $2^{20}$ under the same training configuration as EDM\cite{karras2022elucidating} and adopt accelerated sampler 
iPNDM\cite{zhang2022fast} with the polynomial time schedule\cite{karras2022elucidating} and analytical first step \cite{dockhorn2022genie}. See the supplementary materials for details about the training configuration and sampling process. 
In the \cref{t6}, we can see that the
performance gap between our method and the baseline is
huge when the sampling budget is limited. For instance, our
method achieved an FID score of 8.32 on AFHQv2, which
is significantly better than the baseline’s score of 15.60 when
NFE is 3. Our method exhibits
superior sample qualities across all NFE, even in the case of full sampling. 
See \cref{samples} for visual
comparison. Additional qualitative results are provided
in supplementary materials.
\begin{figure}[tb]
  \centering
  \includegraphics[width=0.9\linewidth]{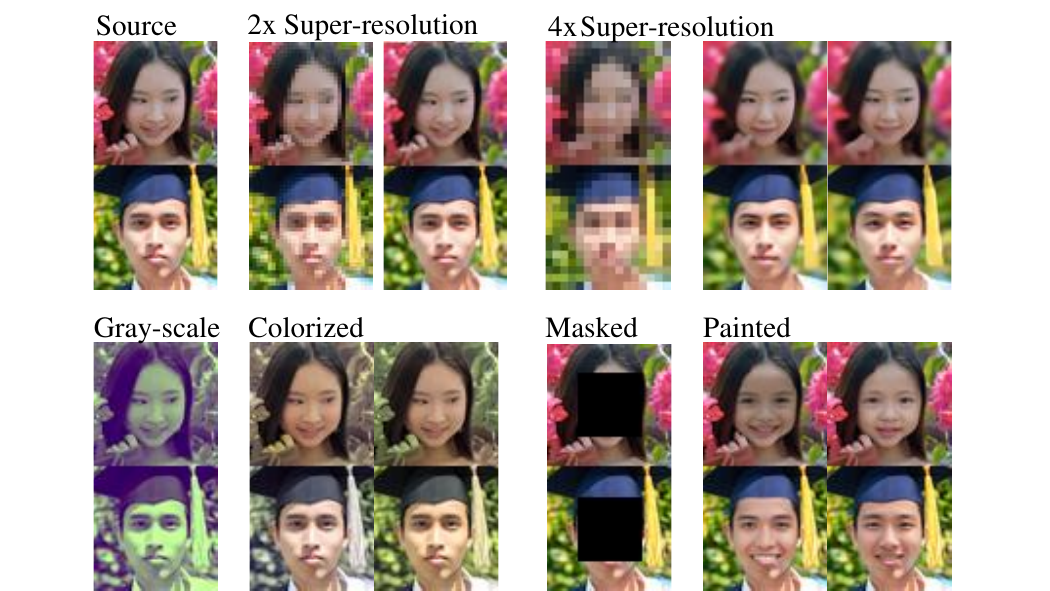}
   \caption{Our method allows SDE-based zero-shot image editing applications such as
   super-resolution, colorization and inpainting. In the experiments, we used Karras's schedule with steps $N=40$.}
   \label{fig:SDEdit}
\end{figure}
\subsection{Zero-Shot Image Editing}\label{sec:4.4}
Comparing with coupling operation, adaptive conditions does not affect the use of SDE-based technologies such as SDEidt. 
Note that $$p(x_{t_{n+1}}|x_{t_{n}})=p(x_{t_{n+1}}|x_{t_n}, y)p(y|x_{0})p(x_{0}|x_{t_n}).$$  According to the \cref{edit}, shown in \cref{fig:SDEdit}, our method is competent for various image editing tasks such as super-resolution, colorization and inpainting. 
\begin{algorithm}[tb]
  \caption{Zero-Shot Image Editing}\label{edit}
  \begin{algorithmic}[1]
    \Require 
    Denoising model $D$, condition code encoder $E$, time steps $\{t_n\}$, reference image $z$
    \State $z \leftarrow A^{-1}[(Az)\odot(1-\Omega)+0\odot\Omega]$       
    \State $x \leftarrow z$ 
    \For{$t=t_1$ to $t_N$}
    \State $y \leftarrow E(x)$
    \State Sample $x\sim\mathcal{N}(x, t^2\bf{I})$ 
    \State $x \leftarrow D(x, t, y)$
    \State $x \leftarrow A^{-1}[(Az)\odot(1-\Omega)+(Ax)\odot\Omega]$ 
    \EndFor\\
    \Return{x}
  \end{algorithmic}
\end{algorithm}
\section{Discussion and Limitations}
    One limitation is that for our encoding space
    , we use a code vector as adaptive condition for practical conveniences: sampling is easily implemented
    and the code sampling weight can conveniencely be collected. However, we can not fully 
    reconstruct the image in one step from such a short binary vector. 
    To further enhance the expressive power 
of the encoder, we believe that a feature map can be used 
instead of a code vector just like most visual tokenization works 
\cite{van2017neural, esser2021taming}.
And an additional network will be considered to generate condition codes 
instead of sampling weight collection for the case of a larger encoding space 
and conditional control generation. In additional, current sampling time 
schedule for diffusion models usually focus on low level of 
noise\cite{karras2022elucidating}. However, our method has the ability to 
reconstruct images with higher level of noise.  
Sampling method more suitable for our method is yet to be discovered. 
And further extension of our method to distillation or fine-tuning techniques 
is also highly anticipated.
\section{Conclusion}
In this paper, we show the degree of forward flow intersection will directly impact the generative performance of few-step sampling. 
We present a efficient plug-and-play method with a quite small additional 
training cost to reduce the average reconstruction loss, which is the first 
method that does not require trajectory relocation and additional regularization. Our approach preserves the critical properties 
of score-based models and is unique and complementary to other acceleration 
methods. We demonstrate that our approach improves the sample quality 
with a significantly reduced sampling budget. 
\par\vfill\par

%
%
\bibliographystyle{splncs04}
\bibliography{main}

\begin{thebibliography}{10}
\providecommand{\url}[1]{\texttt{#1}}
\providecommand{\urlprefix}{URL }
\providecommand{\doi}[1]{https://doi.org/#1}

\bibitem{bengio2013estimating}
Bengio, Y., L{\'e}onard, N., Courville, A.: Estimating or propagating gradients through stochastic neurons for conditional computation. arXiv preprint arXiv:1308.3432  (2013)

\bibitem{berthelot2023tract}
Berthelot, D., Autef, A., Lin, J., Yap, D.A., Zhai, S., Hu, S., Zheng, D., Talbot, W., Gu, E.: Tract: Denoising diffusion models with transitive closure time-distillation. arXiv preprint arXiv:2303.04248  (2023)

\bibitem{cai2021exponential}
Cai, Z., Ravichandran, A., Maji, S., Fowlkes, C., Tu, Z., Soatto, S.: Exponential moving average normalization for self-supervised and semi-supervised learning. In: Proceedings of the IEEE/CVF Conference on Computer Vision and Pattern Recognition. pp. 194--203 (2021)

\bibitem{choi2020stargan}
Choi, Y., Uh, Y., Yoo, J., Ha, J.W.: Stargan v2: Diverse image synthesis for multiple domains. In: Proceedings of the IEEE/CVF conference on computer vision and pattern recognition. pp. 8188--8197 (2020)

\bibitem{chung2023diffusion}
Chung, H., Kim, J., Mccann, M.T., Klasky, M.L., Ye, J.C.: Diffusion posterior sampling for general noisy inverse problems (2023)

\bibitem{dockhorn2022genie}
Dockhorn, T., Vahdat, A., Kreis, K.: Genie: Higher-order denoising diffusion solvers (2022)

\bibitem{esser2021taming}
Esser, P., Rombach, R., Ommer, B.: Taming transformers for high-resolution image synthesis. In: Proceedings of the IEEE/CVF conference on computer vision and pattern recognition. pp. 12873--12883 (2021)

\bibitem{ho2020denoising}
Ho, J., Jain, A., Abbeel, P.: Denoising diffusion probabilistic models. Advances in Neural Information Processing Systems  \textbf{33},  6840--6851 (2020)

\bibitem{ho2022classifier}
Ho, J., Salimans, T.: Classifier-free diffusion guidance. arXiv preprint arXiv:2207.12598  (2022)

\bibitem{ho2022video}
Ho, J., Salimans, T., Gritsenko, A., Chan, W., Norouzi, M., Fleet, D.J.: Video diffusion models (2022)

\bibitem{hyvarinen2005estimation}
Hyv{\"a}rinen, A., Dayan, P.: Estimation of non-normalized statistical models by score matching. Journal of Machine Learning Research  \textbf{6}(4) (2005)

\bibitem{karras2022elucidating}
Karras, T., Aittala, M., Aila, T., Laine, S.: Elucidating the design space of diffusion-based generative models. Advances in Neural Information Processing Systems  \textbf{35},  26565--26577 (2022)

\bibitem{karras2019style}
Karras, T., Laine, S., Aila, T.: A style-based generator architecture for generative adversarial networks. In: Proceedings of the IEEE/CVF conference on computer vision and pattern recognition. pp. 4401--4410 (2019)

\bibitem{kingma2021variational}
Kingma, D., Salimans, T., Poole, B., Ho, J.: Variational diffusion models. Advances in neural information processing systems  \textbf{34},  21696--21707 (2021)

\bibitem{kingma2013auto}
Kingma, D.P., Welling, M.: Auto-encoding variational bayes. arXiv preprint arXiv:1312.6114  (2013)

\bibitem{kong2020diffwave}
Kong, Z., Ping, W., Huang, J., Zhao, K., Catanzaro, B.: Diffwave: A versatile diffusion model for audio synthesis. arXiv preprint arXiv:2009.09761  (2020)

\bibitem{krizhevsky2009learning}
Krizhevsky, A., Hinton, G., et~al.: Learning multiple layers of features from tiny images  (2009)

\bibitem{kwon2022score}
Kwon, D., Fan, Y., Lee, K.: Score-based generative modeling secretly minimizes the wasserstein distance. Advances in Neural Information Processing Systems  \textbf{35},  20205--20217 (2022)

\bibitem{lee2023minimizing}
Lee, S., Kim, B., Ye, J.C.: Minimizing trajectory curvature of ode-based generative models. arXiv preprint arXiv:2301.12003  (2023)

\bibitem{lipman2022flow}
Lipman, Y., Chen, R.T., Ben-Hamu, H., Nickel, M., Le, M.: Flow matching for generative modeling. arXiv preprint arXiv:2210.02747  (2022)

\bibitem{liu2022pseudo}
Liu, L., Ren, Y., Lin, Z., Zhao, Z.: Pseudo numerical methods for diffusion models on manifolds (2022)

\bibitem{liu2022flow}
Liu, X., Gong, C., et~al.: Flow straight and fast: Learning to generate and transfer data with rectified flow. In: The Eleventh International Conference on Learning Representations (2022)

\bibitem{lu2022dpm}
Lu, C., Zhou, Y., Bao, F., Chen, J., Li, C., Zhu, J.: Dpm-solver: A fast ode solver for diffusion probabilistic model sampling in around 10 steps. arXiv preprint arXiv:2206.00927  (2022)

\bibitem{lu2023dpmsolverfastsolverguided}
Lu, C., Zhou, Y., Bao, F., Chen, J., Li, C., Zhu, J.: Dpm-solver++: Fast solver for guided sampling of diffusion probabilistic models (2023), \url{https://arxiv.org/abs/2211.01095}

\bibitem{luhman2021knowledge}
Luhman, E., Luhman, T.: Knowledge distillation in iterative generative models for improved sampling speed. arXiv preprint arXiv:2101.02388  (2021)

\bibitem{luo2024diff}
Luo, W., Hu, T., Zhang, S., Sun, J., Li, Z., Zhang, Z.: Diff-instruct: A universal approach for transferring knowledge from pre-trained diffusion models. Advances in Neural Information Processing Systems  \textbf{36} (2024)

\bibitem{Maoutsa_2020}
Maoutsa, D., Reich, S., Opper, M.: Interacting particle solutions of fokker–planck equations through gradient–log–density estimation. Entropy  \textbf{22}(8), ~802 (Jul 2020). \doi{10.3390/e22080802}, \url{http://dx.doi.org/10.3390/e22080802}

\bibitem{mentzer2023finite}
Mentzer, F., Minnen, D., Agustsson, E., Tschannen, M.: Finite scalar quantization: Vq-vae made simple (2023)

\bibitem{nichol2021improved}
Nichol, A.Q., Dhariwal, P.: Improved denoising diffusion probabilistic models. In: International Conference on Machine Learning. pp. 8162--8171. PMLR (2021)

\bibitem{pooladian2023multisample}
Pooladian, A.A., Ben-Hamu, H., Domingo-Enrich, C., Amos, B., Lipman, Y., Chen, R.: Multisample flow matching: Straightening flows with minibatch couplings. arXiv preprint arXiv:2304.14772  (2023)

\bibitem{poole2022dreamfusion}
Poole, B., Jain, A., Barron, J.T., Mildenhall, B.: Dreamfusion: Text-to-3d using 2d diffusion. arXiv preprint arXiv:2209.14988  (2022)

\bibitem{salimans2022progressive}
Salimans, T., Ho, J.: Progressive distillation for fast sampling of diffusion models. arXiv preprint arXiv:2202.00512  (2022)

\bibitem{salmona2022pushforward}
Salmona, A., de~Bortoli, V., Delon, J., Desolneux, A.: Can push-forward generative models fit multimodal distributions? (2022)

\bibitem{shao2023catchup}
Shao, S., Dai, X., Yin, S., Li, L., Chen, H., Hu, Y.: Catch-up distillation: You only need to train once for accelerating sampling (2023)

\bibitem{song2020denoising}
Song, J., Meng, C., Ermon, S.: Denoising diffusion implicit models. In: International Conference on Learning Representations (2020)

\bibitem{song2023consistency}
Song, Y., Dhariwal, P., Chen, M., Sutskever, I.: Consistency models. arXiv preprint arXiv:2303.01469  (2023)

\bibitem{song2019generative}
Song, Y., Ermon, S.: Generative modeling by estimating gradients of the data distribution. Advances in neural information processing systems  \textbf{32} (2019)

\bibitem{song2020score}
Song, Y., Sohl-Dickstein, J., Kingma, D.P., Kumar, A., Ermon, S., Poole, B.: Score-based generative modeling through stochastic differential equations. arXiv preprint arXiv:2011.13456  (2020)

\bibitem{vahdat2021score}
Vahdat, A., Kreis, K., Kautz, J.: Score-based generative modeling in latent space. Advances in Neural Information Processing Systems  \textbf{34},  11287--11302 (2021)

\bibitem{van2017neural}
Van Den~Oord, A., Vinyals, O., et~al.: Neural discrete representation learning. Advances in neural information processing systems  \textbf{30} (2017)

\bibitem{vincent2011connection}
Vincent, P.: A connection between score matching and denoising autoencoders. Neural computation  \textbf{23}(7),  1661--1674 (2011)

\bibitem{xue2024sa}
Xue, S., Yi, M., Luo, W., Zhang, S., Sun, J., Li, Z., Ma, Z.M.: Sa-solver: Stochastic adams solver for fast sampling of diffusion models. Advances in Neural Information Processing Systems  \textbf{36} (2024)

\bibitem{yin2023one}
Yin, T., Gharbi, M., Zhang, R., Shechtman, E., Durand, F., Freeman, W.T., Park, T.: One-step diffusion with distribution matching distillation. arXiv preprint arXiv:2311.18828  (2023)

\bibitem{zhang2022fast}
Zhang, Q., Chen, Y.: Fast sampling of diffusion models with exponential integrator. arXiv preprint arXiv:2204.13902  (2022)

\bibitem{zhao2024unipc}
Zhao, W., Bai, L., Rao, Y., Zhou, J., Lu, J.: Unipc: A unified predictor-corrector framework for fast sampling of diffusion models. Advances in Neural Information Processing Systems  \textbf{36} (2024)

\bibitem{zheng2023dpmsolverv3}
Zheng, K., Lu, C., Chen, J., Zhu, J.: Dpm-solver-v3: Improved diffusion ode solver with empirical model statistics (2023)

\bibitem{zhou2023fast}
Zhou, Z., Chen, D., Wang, C., Chen, C.: Fast ode-based sampling for diffusion models in around 5 steps (2023)

\end{thebibliography}
\end{document}